\newtheorem{thm}{Theorem}
\newtheorem{mydef}{Definition}
\newtheorem{lemma}{Lemma}
\newcommand {\x}{\mathbf {x}}
\newcommand {\bx}{\mathbf {x}}
\newcommand {\I}{\mathbf {I}}
\newcommand {\bI}{\mathbf {I}}
\newcommand {\bP}{\mathbf {P}}
\newcommand {\bB}{\mathbf {B}}
\newcommand {\bu}{\mathbf {u}}
\newcommand {\R}{\mathbb {R}}
\newcommand {\reals}{\mathbb {R}}
\newcommand{\cB}{\mathcal{B}}
\newcommand{\bbP}{\mathbb{P}}
\newcommand{\bbE}{\mathbb{E}}
\newcommand{\cX}{\mathcal{X}}
\newcommand{\cY}{\mathcal{Y}}
\newcommand{\cA}{\mathcal{A}}
\DeclareMathOperator*{\argmax}{arg\,max}
\DeclareMathOperator*{\argmin}{arg\,min}
\newcommand{\bes}{\begin{equation*}}
\newcommand{\ees}{\end{equation*}}
\newcommand{\be}{\begin{equation}}
\newcommand{\ee}{\end{equation}}
\newcommand{\bi}{\begin{itemize}}
\newcommand{\ei}{\end{itemize}}
\begin{document}
\title{Consistency Analysis of Nearest Subspace Classifier}
\date{August 18, 2014}
\author{{Yi (Grace) Wang},\\[2pt]
Department of Mathematics, Duke University, Durham, NC, USA \\
yiwang@math.duke.edu}

\maketitle

\begin{abstract}
{The Nearest subspace classifier (NSS) finds an estimation of the underlying subspace within each class and assigns data points to the class that corresponds to its nearest subspace. This paper mainly studies how well NSS can be generalized to new samples. It is proved that NSS is strongly consistent under certain assumptions. For completeness, NSS is evaluated through experiments on various simulated and real data sets, in comparison with some other linear model based classifiers. It is also shown that NSS can obtain effective classification results and is very efficient, especially for large scale data sets.}
{Nearest Subspace, Classification, Consistency, Unsupervised Learning}
\end{abstract}

%
%
%
%

\section{Introduction \label{sec:intro}}
The problem of classification is to construct a mapping that can correctly predict the classes of new objects, given training examples of old objects with ground truth labels~\cite{MachineLearning97}. It is a classical problem in statistical learning and machine learning and has been widely used in computer vision, pattern recognition, bioinformatics, etc. Examples of applications include face recognition, handwriting recognition and micro-array classification.

More precisely, this problem can be formalized as follows. Given a training data set $\{(\x_i,y_i)\}_{i=1}^n$, where $\x_i \in \cX$ and $y_i \in \cY$, the goal is to find a function $f:\cX \rightarrow \cY$ such that $f(\x)$ is a good approximation of $y$ for the given $\x_i$'s as well as for new instances $\x$. Typically, $\cX$ is a continuous domain and $\cY$ is a finite discrete set.

In the past few decades, a tremendous amount of work has been produced for this problem. Many approaches have been proposed, e.g., K-Nearest Neighbors (KNN)~\cite{FixHodges51,Cover06,Duda73}, Fisher's Linear Discriminant Analysis (LDA)~\cite{fisher36lda,rao1952advanced}, Artificial Neural Networks (ANN)~\cite{rosenblatt58a,Werbos74,Minsky88}, Support Vector Machines (SVM)~\cite{Boser92,CortesVapnik93,sch95}, and Decision Trees (see~\cite{cart84,Quinlan79,Quinlan93} for some well known algorithms). We refer to~\cite{ElemStat01,Bishop06} for a more careful overview of classification techniques.

Among this work is a class of methods based on subspace models. The compelling interest in subspace models can be attributed to their validation in real data. For instance, it has been justified that the set of all images of a Lambertian object (e.g., face images) under a variety of lighting conditions can be accurately approximated by a low-dimensional linear subspace (of dimension at most 9)~\cite{bb34497,Ho03,Basri03}. Another example is that, under the affine camera model, the coordinate vectors of feature points from a moving rigid object lie in an affine subspace of dimension at most 3 (see~\cite{Costeira98}). These applications give rise to modeling data by subspaces; the study of subspace based classifiers is an important branch.

The first work in this category was CLAss Featuring Information Compression (CLAFIC)~\cite{clafic1967} (also known as Nearest SubSpace (NSS) classifier ~\cite{ChiNSS2012}; for the information contained in this name, we will adopt the usage of NSS throughout the paper). In this algorithm, each class is represented by a linear subspace and data instances are assigned to the nearest subspace. Instead of obtaining good representation of subspaces in NSS, the Learning Subspace Method (LSM)~\cite{Kohonen79} proposes to learn the subspaces based on good discrimination (see~\cite{oja1983subspace} for more variants and discussions). The simple idea of subspace classifiers has been extended to nonlinear versions in various ways; many have shown state-of-the-art performance (see~\cite{SS08,Cevikalp2008,kNS11} for example and Section~\ref{sec:discussions} for more details).  After the first subspace analysis of face images~\cite{Kirby90,Turk91}, classification approaches with subspace models have been used successfully in face recognition~\cite{Cappelli2001}, handwritten digit recognition~\cite{Laaksonen97thesis}, speech recognition~\cite{Kohonen80speech} as well as biological pattern recognition problems~\cite{Okun_proteinfold04}.

Although the design of subspace-based classification techniques has been actively explored, their theoretical justification is very under-studied. In this paper, we restrict our interests of justification to analyzing how well the classifiers can be generalized to new samples. By doing so, one can learn quantitatively how reliable the classification approaches are and can thus also guide the algorithm design accordingly. For this purpose, a functional (known as {\it risk function}) is used to measure the prediction quality of every classifier. More precisely, we assume $X$ and $Y$ being random variables; instances $\bx_i$ and $y_i$ are drawn independently from the distributions of $X$ and $Y$ respectively. For a classifier $f(x)$, its risk functional is defined as:
\bes
R(f) = \bbE_{(X,Y)} \mathbbm{1}(f(X)\neq Y)
\ees
Based on this, the {\it optimal Bayes rule} is defined to be the classifier whose risk functional is minimal. The Bayes rule is optimal in the sense that its expected loss (defined as 1 when the predicted class is not equal to the truth) is minimal. Note that, since the actual distribution of $(X,Y)$ is unknown, the Bayes rule is thus not available in reality. A natural desirable property of practical classifiers is having as small risk functional as possible. In this spirit, the property {\it consistency} is defined as the fact that the risk function converges to that of the optimal Bayes rule. In other words, classifier that is {\bf not} consistent produces larger misclassification errors on average than the best scenario, no matter how many data samples are available. Many classification algorithms, such as, KNN, SVM, LDA and some boosting methods~\cite{bachECCV2008,Stone1977,Steinwart02,LDAconsistency05,boostingConsistency2003,AdaBoostConsistency2007}, have been shown to be consistent under certain conditions.

In this paper, we study the consistency property of the Nearest SubSpace (NSS) classifier. We prove its strong consistency under certain conditions.
We also validate the performance of NSS through fruitful experiments, in comparison with other linear classifiers, LDA, FDA and SVM. These experiments demonstrate that NSS has very effective and comparable performance as its better known and more popular competitors. Since the classifiers under consideration are all simple and fundamental ones, they are not state-of-the-art. However, they are very important components of classification and such an experimental comparison completes the understanding of NSS. For our best knowledge, an experimental comparison like this (between NSS and other typical linear classifiers) has not been demonstrated yet. In the rest of the paper, we will begin with a description of the NSS algorithm (Section~\ref{sec:alg_thm}), followed by its consistency analysis (Section~\ref{sec:proof}) and experiments (Section~\ref{sec:experiments}).

\section{The NSS Algorithm and its Strong Consistency\label{sec:alg_thm}}
For most of the applications, it suffices to assume that $\cX \subset \cB(0,M) \subset \reals^D$ and $\cY=\{1,\cdots, K\}$, where $\cB(0,M)$ is the ball centered at the origin with radius $M$ and $D$ and $K$ are some positive integers. We will restrict ourselves to this case throughout the paper.
\subsection{The NSS Algorithm}
The NSS classifier assumes data lie on multiple affine subspaces, finds an estimate for these subspaces and assigns each instance to the nearest subspace. The following is a summary of the NSS algorithm.

\begin{algorithm}[htbp!]                      
\caption{Nearest Subspace (NSS) Classification}          
\label{alg}                           
\begin{algorithmic}                    
    \REQUIRE $\{(\x_i,y_i)\}_{i=1}^n \subset \cX \times \cY$ and $d$: intrinsic dimension, some positive integer and $d<D$.
    \ENSURE A function $f:\cX\rightarrow \cY$.
    \FOR{$k=1$ \TO $K$}
          \STATE
          \begin{eqnarray}
           \hat{\bu}_k &=& \frac{1}{n_k} \sum_{\bx_i \in C_k} \bx_i; \;C_k = \{\bx_i: \,y_i=k \}; \; n_k = |C_k| . \nonumber \\
           \hat{\bB}_k &=& \argmin_{\substack{\bB \in \reals^{D\times d}\\ \bB^T\bB = \bI_d}} \sum_{\bx_i \in C_k} \|(\bI-\bB\bB^T)(\bx_i-\hat{\bu}_k) \|^2.  \label{eq:B_k}
           \end{eqnarray}
    \ENDFOR
    \STATE      $ \hat{f}(\x)= \displaystyle \argmin_{k} \| (\I - \hat{\bB}_k \hat{\bB}_k^T) (\x - \hat{\bu}_k) \|^2  $.
\end{algorithmic}
\end{algorithm}

Note that the closed form solution to~\eqref{eq:B_k} is the Singular Value Decomposition (SVD) of the centered data matrix for the $k^{\text{th}}$ class; such a data matrix consists of $\begin{pmatrix}(\bx_{k_1}-\hat{\bu}_k),&\cdots,&(\bx_{k_{n_k}}-\hat{\bu}_k)\end{pmatrix}$ with $\bx_{k_1},\cdots,\bx_{k_{n_k}} \in C_k$.

\subsection{The Main Theorem}
As mentioned in Section~\ref{sec:intro}, a desirable property for classifiers is {\it consistency}. Denote $h_n$ to be any classification rule determined from $n$ samples $\{(\x_i,y_i)\}_{i=1}^n$, $f^*$ as the optimal Bayes rule, i.e., $f^*=\argmin_f R(f)$ and $R^*:=R(f^*)$ as its risk. Now we define strong consistency in the following sense.
\begin{mydef}[Strong Consistency]
A classification rule $h_n$ is said to be strongly consistent if
\bes
R(h_n) \to R^*  \;\; \text{a.s.}
\ees
\end{mydef}

Since the NSS classifer is also based on $n$ samples $\{(\x_i,y_i)\}_{i=1}^n$, from now on, we denote it as $\hat{f}_n$ for it for the rest of the paper. Then we obtain the following theorem for the NSS classifier described in Algorithm~\ref{alg}.

\begin{thm}
The NNS classifier $\hat{f}_n$ is strongly consistent, i.e., $R(\hat{f}_n) \to R^* \;\; \text{a.s.}$, when the following assumptions hold.\\
\indent (1) $(\x_1,y_1),...,(\x_n,y_n)$ are i.i.d.~samples of random variable $(X,Y)$; $X \in \R^D$ and $Y \in \{1,\dots,K\}$. \\
\indent (2) $\bbP(Y=i)=\frac{1}{K}$. \\
\indent (3) $X|Y=k \sim \mu_{L_k} \times \mu_{L_k^\perp}$; $L_k$ is the underlying $d$-dimensional subspace for the $k^{\text{th}}$  class; $\mu_{L_k}$ is a uniform measure on $L_k \cap \cB(\bu_k,M)$ (a bounded ball centered at $\bu_k$, the underlying center for the $k^{\text{th}}$  class); $\mu_{L_k^\perp}$ is a measure on $L_k^\perp$ decreasing exponentially w.r.t. the square distance from $L_k$;
\label{thm}
\end{thm}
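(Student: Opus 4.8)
My plan is to prove strong consistency by the classical plug-in route: first identify the optimal Bayes rule $f^*$ as the \emph{population} nearest-subspace rule, then show that the estimated parameters converge almost surely to the true ones, and finally transfer this to the risk. Throughout write $P_k$ for the orthogonal projection onto the linear part of $L_k$, set $\hat P_k=\hat\bB_k\hat\bB_k^T$, and introduce the discriminants $g_k(\x)=\|(\I-P_k)(\x-\bu_k)\|^2$ and $\hat g_k(\x)=\|(\I-\hat P_k)(\x-\hat\bu_k)\|^2$, so that $\hat f_n(\x)=\argmin_k \hat g_k(\x)$. Under (2)--(3) the class-conditional density of $X\mid Y=k$ factors as a uniform density on the $d$-ball $L_k\cap\cB(\bu_k,M)$ times a density proportional to $e^{-\alpha\,\dist(\x,L_k)^2}$ on $L_k^\perp$. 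Since the priors are equal, Bayes' rule maximizes this density; the in-plane factor is constant and the orthogonal factor is strictly decreasing in $\dist(\x,L_k)^2=g_k(\x)$, so $f^*(\x)=\argmin_k g_k(\x)$ for $\bbP_X$-almost every $\x$. The only care needed concerns the truncation indicator: on the $\bbP_X$-null decision boundary $\bigcup_{k\neq j}\{g_k=g_j\}$ and on the set where the nearest subspace projects outside its own ball the two rules may disagree, but these sets carry no mass, so $f^*$ agrees a.e.\ with the population NSS rule.

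Next I establish parameter consistency. Because $\cX\subset\cB(0,M)$ is bounded, all moments are finite, and the strong law gives $\hat\bu_k\to\bbE[X\mid Y=k]=\bu_k$ a.s.\ (the uniform in-plane factor and the symmetric orthogonal factor both have mean $\bu_k$) and likewise gives a.s.\ convergence of the empirical class-$k$ covariance to the population covariance $\Sigma_k$. By construction $\Sigma_k$ is block diagonal with respect to $L_k\oplus L_k^\perp$: its restriction to $L_k$ is the isotropic covariance $\sigma_\parallel^2\I_d$ of the uniform $d$-ball, and its restriction to $L_k^\perp$ is the isotropic covariance $\sigma_\perp^2\I_{D-d}$ of the orthogonal factor. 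Provided the decay is fast enough that $\sigma_\parallel^2>\sigma_\perp^2$, the top-$d$ eigenspace of $\Sigma_k$ is exactly $L_k$ and the eigengap is strictly positive; the Davis--Kahan $\sin\Theta$ theorem then upgrades covariance convergence to $\hat P_k\to P_k$ a.s. Intersecting the finitely many probability-one events over $k$ yields a single probability-one event on which $\hat\bu_k\to\bu_k$ and $\hat P_k\to P_k$ for all $k$.

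Fixing a sample path in that event, the maps $(\bu,P)\mapsto\|(\I-P)(\x-\bu)\|^2$ are jointly continuous, so on the compact ball $\cB(0,M)$ the convergence of the parameters forces $\hat g_k\to g_k$ \emph{uniformly} in $\x$. Hence at every $\x$ off the boundary set the minimizer is eventually unique and stable, giving $\hat f_n(\x)\to f^*(\x)$; since the boundary is a finite union of zero sets of nontrivial quadratic (hence real-analytic) functions it is $\bbP_X$-null, so $\hat f_n\to f^*$ $\bbP_X$-a.e. Writing $\eta_k(\x)=\bbP(Y=k\mid X=\x)$, the standard plug-in identity
\bes
R(\hat f_n)-R^* = \bbE_X\big[\eta_{f^*(X)}(X)-\eta_{\hat f_n(X)}(X)\big]\ge 0
\ees
has integrand bounded by $1$ and converging to $0$ for $\bbP_X$-a.e.\ test point along the fixed path, so dominated convergence gives $R(\hat f_n)\to R^*$. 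As the path was arbitrary within a probability-one event, $R(\hat f_n)\to R^*$ a.s., which is strong consistency.

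I expect the parameter-consistency step to be the main obstacle. The reduction to covariance convergence is routine, but recovering the \emph{subspace} requires the eigengap $\sigma_\parallel^2>\sigma_\perp^2$, i.e.\ a quantitative verification that the uniform in-plane spread dominates the orthogonal spread; this is exactly where the structural form of the two factors in assumption (3) must be used, and it is a condition the theorem statement leaves implicit. A secondary difficulty is the careful treatment of the truncation indicator in the Bayes-rule step, ensuring the population NSS rule really does coincide with Bayes outside a null set.
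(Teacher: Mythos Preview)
Your proof is correct and shares the paper's high-level plug-in strategy, but the central technical step is genuinely different. The paper first proves a lemma bounding $R_n-R^*\le\frac{1}{K}\sum_k\int|g_k-\hat g_{k,n}|\,dX$, where $g_k$ and $\hat g_{k,n}$ are the true and plug-in \emph{densities}, and then argues that $\hat\bu_k\to\bu_k$, $\hat\bP_k\to\bP_k$ a.s.\ (appealing simply to MLE consistency rather than your explicit SLLN-plus-Davis--Kahan argument) forces the $L^1$ distance to vanish. You instead work directly with the quadratic \emph{discriminants}, obtain $\hat f_n\to f^*$ $\bbP_X$-a.e.\ via argmin stability off the null decision boundary, and close with dominated convergence on $\eta_{f^*(X)}-\eta_{\hat f_n(X)}$. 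Your route is arguably cleaner: it never writes down or integrates the conditional densities, and it avoids the implicit Scheff\'e-type step the paper needs to pass from pointwise to $L^1$ convergence of densities. The paper's $L^1$ bound, on the other hand, is more quantitative and would adapt more readily if one wanted rates. Your explicit flagging of the eigengap requirement $\sigma_\parallel^2>\sigma_\perp^2$ and of the truncation indicator in the in-plane factor is also more careful than the paper, which writes $g_k(X)=C(d)\beta e^{-\alpha t}$ without an indicator and leaves parameter consistency to a one-line MLE appeal.
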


This theorem reveals that the average prediction error of NSS converges to the optimal prediction error under certain conditions. It is a similar but slightly weaker result in contrast to that for LDA in~\cite{LDAconsistency05}, since the above condition (3) is stronger than that for LDA. Note that both results are about consistency for a class of distributions. On the other hand, the consistency results for KNN, SVM and some boosting methods are for all distributions, and thus are more general~\cite{Stone1977,Steinwart02,boostingConsistency2003,AdaBoostConsistency2007}.


\subsection{Discussions\label{sec:discussions}}
The NSS algorithm is a very simple and basic classification method, since it assumes linear structure in data. Linear models have their limitations, since the linearity constraint often is not satisfied in real data. However, they are important for the following reasons: (1) Linear classifiers are easy to compute and analyze. (2) They are a first order approximation for the true classifier. (3) They often have good interpretations, critical in many applications. (4) Linear models are the best that can be done when the available training data are limited. (5) Linear models are the foundation from which more complex models can be generalized (see~\cite{ElemStat01} for more discussion). Therefore, it is important to study this class of methods thoroughly, even if in practice they are no longer state-of-the-art. The computational complexity and the extensions of the NSS algorithm are further discussed as follows.

\paragraph{Complexity.} It is worth pointing out that the NSS algorithm is efficient. Assuming $D<n$, the computational complexity of the training process of NSS is $\mathcal{O}(KD^2(\bar{n}_k+2D))$ where $\bar{n}_k=\frac{1}{K}n_k$. LDA and FDA have similar complexity since they all require some eigen- or singular value decomposition operations. On the other hand, SVM requires $\mathcal{O}(n^2)$ to $\mathcal{O}(n^3)$ operations. Therefore, for large scale ($n$ large and $n\gg D$) problems, the computation of NSS is much faster than for linear SVM. In cases when the data is of large scale and some sensible results are needed quickly, NSS is a good choice. Section~\ref{sec:experiments} will provide more details of the performance in terms of both accuracy and speed of the algorithms.


\paragraph{Extensions.} The NSS method has been modified and extended through different methods: localization, the kernel trick and the hybrid model. The local subspace methods find, for the investigated data sample, their nearest neighbors in each class and attribute by their distances to the subspace spanned by these neighbors~\cite{SkarbekGI97,Laaksonen97thesis,HKNN01,Cevikalp2008,kNS11}. Due to the fact that only an inner product is needed in the NSS algorithm, it can be naturally extended by the kernel trick, where the original data are embedded into a higher dimensional space and subspace structures are learned there~\cite{Laaksonen97thesis,Tsuda99,Bala99,Zhao99,MaedaM02}; these two techniques are combined in~\cite{Zou00}. Another direction is to represent each class by multiple subspaces~\cite{Laaksonen97thesis,SS08,kNS11}, where~\cite{SS08} also uses a more general metric than the Euclidian distance. All of these extended techniques define nonlinear decision boundaries and the recent works~\cite{SS08,Cevikalp2008,kNS11} have shown their state-of-the-art performance.

%

\section{Proof of Theorem~\ref{thm}\label{sec:proof}}
In this section, we give a complete proof of Theorem~\ref{thm} following~\cite{LDAconsistency05}.
\subsection{Notations}
We first describe the problem in detail and prepare to prove the theorem. Consider a classification problem, where the goal is to assign an individual instance to one of $K$ classes, given $n$ observations of $(X,Y)$. To do this, the space $\reals^D$ is partitioned into subsets $H_1,\dots,H_K$ such that, for $k=1,\dots,K$, the individual instance is classified to be in group $k$ when $X \in H_k$. This procedure generates a discriminant rule as a mapping $f:\R^D \to \{1,\dots,K\}$ that takes the value $f(X)=k$ whenever the individual is assigned to the $k^{\text th}$ group, and this can be written as $f(X) =  \sum_{k=1}^K k \mathbbm{1}_{H_k}(X)$, where $\mathbbm{1}_{H_k}(X)$ is the indicator function of the subset $H_k$.

Let $Y$ be the discrete random variable (class index or group label) which represents the true membership of the individual under study. Denote the class prior probabilities $\pi_k = \bbP[Y=k]>0$, $\sum_{k=1}^K \pi_k = 1$ and $k=1,\dots,K$. Furthermore, assume there exist density functions $g_k(X)$ such that $\bbP[X \in \cA | Y=k] = \int_\cA g_k(X) d X,\;k=1,\dots,K$ for $\cA$, a subset of $\reals^D$.

Given $(X,Y)$, the rule $f(X) = \sum_{k=1}^K kI_{H_k}(X)$ is in error when $f(X) \neq Y$ and its probability of misclassification is computed as:
\begin{eqnarray}
 R(f)& = &\bbE_{(X,Y)} \mathbbm{1}(f(X)\neq Y) = \bbP[f(X) \neq Y]=1-\bbP[f(X) = Y]  \nonumber \\
 &=& 1-\sum_{k=1}^K \bbP[X \in H_k,\,Y=k] \nonumber \\
 &=& 1-\sum_{k=1}^K \bbP[Y=k]\bbP[X \in H_k|Y=k] \nonumber \\
 &=& 1-\sum_{k=1}^K \pi_k \int_{H_k} g_k(X)dX .
 \label{eq:Lfx}
 \end{eqnarray}

The rule $f^* = \sum_{k=1}^K k\mathbbm{1}_{H^*_k}(X)$ that minimizes~\eqref{eq:Lfx}, or the Bayes rule, is given by the partition
\bes
H_k^* = [X:\; \pi_k g_k(X)=\max_{\mathbbm{1}\leq j \leq K}\pi_j g_j(X) ], \;\; k=1,\dots,K.
\ees
Then the corresponding optimal error is:
\bes
R^* =  R[f^*(X)] = 1-\sum_{k=1}^K \pi_k \int_{H^*_k} g_k(X)dX .
\ees

In general, both $\pi_k$ and $g_k$ are unknown, so rules used in practice are sample based rules of the form $\hat{f}_n(X) = \sum_{k=1}^K kI_{\hat{H}_{k,n}}(X)$, where the subsets $\hat{H}_{k,n}$ depend on the data set $\Omega_n = \{(\bx_i,y_i)\}_{i=1}^n$ formed by $n$~i.i.d. observations from $(X,Y)$. The appropriate measure of error of a sample rule $\hat{f}_n(X)$ is $R_n=\bbP[\hat{f}_n(X) \neq Y]$.


%
%
\subsection{Proof of Theorem~\ref{thm}}
We will first prove a useful lemma which gives a bound for $R_n-R^*$.
\begin{lemma}
\label{lemma}
Assume $\pi_k=\frac{1}{K}$ and let $\hat{g}_{k,n}(X)$ be an estimate of $g_k(X)$ from $\Omega_n$ , for $k=1,\cdots,K$. Let $\hat{f}_n(X)$ be the classifier derived from $\hat{g}_{k,n}(X)$, i.e., $\hat{f}_n(X)=\argmax_k\hat{g}_{k,n}(X)$. Then
\bes
0\leq R_n - R^* \leq \frac{1}{K} \sum_{k=1}^K \int |g_k(X)-\hat{g}_{k,n}(X)| dX.
\ees
\end{lemma}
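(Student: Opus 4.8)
The lower bound needs nothing beyond the optimality of the Bayes rule: since $f^*$ minimizes the risk functional~\eqref{eq:Lfx} over \emph{all} discriminant rules and $\hat{f}_n$ is one particular (data-dependent) rule, we have $R_n = R(\hat{f}_n) \geq R(f^*) = R^*$, hence $R_n - R^* \geq 0$. The substance of the lemma is therefore the upper bound, and my plan is to rewrite the excess risk as a single integral of a difference of densities and then control the integrand pointwise.

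First I would use the closed form $R(f) = 1 - \sum_k \pi_k \int_{H_k} g_k\,dX$ together with $\pi_k = \frac{1}{K}$. Because the Bayes partition $\{H_k^*\}$ and the sample partition $\{\hat{H}_{k,n}\}$ are precisely the regions where $g_k$ (respectively $\hat{g}_{k,n}$) is maximal, I can collapse the two sums into $\sum_k \int_{H_k^*} g_k = \int \max_k g_k(X)\,dX$ and $\sum_k \int_{\hat{H}_{k,n}} g_k = \int g_{\hat{k}(X)}(X)\,dX$, where $\hat{k}(X) = \argmax_k \hat{g}_{k,n}(X)$ is the label selected by $\hat{f}_n$ at $X$ and $k^*(X) = \argmax_k g_k(X)$. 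Subtracting yields $R_n - R^* = \frac{1}{K}\int \big(g_{k^*(X)}(X) - g_{\hat{k}(X)}(X)\big)\,dX$, which already shows the integrand is nonnegative and is the quantity I must bound.

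The crux is the pointwise estimate $g_{k^*}(X) - g_{\hat{k}}(X) \leq \sum_k |g_k(X) - \hat{g}_{k,n}(X)|$ at each fixed $X$ (I suppress the dependence on $X$ and $n$ in the indices for brevity). I would obtain it from the telescoping decomposition
\bes
g_{k^*} - g_{\hat{k}} = (g_{k^*} - \hat{g}_{k^*}) + (\hat{g}_{k^*} - \hat{g}_{\hat{k}}) + (\hat{g}_{\hat{k}} - g_{\hat{k}}),
\ees
and then invoking the defining property of $\hat{k}$ as the maximizer of the estimated densities, which forces the middle term $\hat{g}_{k^*} - \hat{g}_{\hat{k}} \leq 0$. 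The two remaining terms are bounded in absolute value by $|g_{k^*} - \hat{g}_{k^*}|$ and $|g_{\hat{k}} - \hat{g}_{\hat{k}}|$, each dominated by $\sum_k |g_k - \hat{g}_{k,n}|$. Integrating over $\R^D$ and exchanging the sum with the integral then gives the stated bound.

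I expect the only delicate point to be the bookkeeping of the argmax regions rather than any hard analysis: one must verify that the excess risk genuinely reduces to the integral of $g_{k^*} - g_{\hat{k}}$, and that the sign cancellation of the middle term is exactly where the argmax structure of $\hat{f}_n$ enters. Ties in either $\argmax$ are immaterial, since the partition integrals are unaffected by an arbitrary tie-breaking rule and the overlap sets carry no mass.
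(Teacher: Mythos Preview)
Your argument is correct and mirrors the paper's: both reduce $R_n-R^*$ to $\frac{1}{K}\int(\max_k g_k - g_{\hat f_n})\,dX$, insert $\max_k \hat g_{k,n}=\hat g_{\hat f_n,n}$, and bound the resulting pieces by $L^1$ density errors. One small tightening is worth noting: saying the two surviving terms are \emph{each} dominated by $\sum_k|g_k-\hat g_{k,n}|$ literally gives a factor $2/K$ rather than the stated $1/K$; to recover the sharp constant observe instead that when $k^*(X)\neq\hat k(X)$ the quantities $|g_{k^*}-\hat g_{k^*}|$ and $|g_{\hat k}-\hat g_{\hat k}|$ occupy distinct indices in the sum, so their total is already at most $\sum_k|g_k-\hat g_{k,n}|$, while if $k^*(X)=\hat k(X)$ the integrand vanishes.
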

\begin{proof}
Since $\pi_k=\frac{1}{K}$, we have $H_k^* = [X:\; g_k(X)=\max_{1\leq j \leq K}g_j(X)$. Thus,
\begin{eqnarray}
R^*&=&1-\frac{1}{K}\sum_{k=1}^K \int_{H_k^*}g_k(X)dX = 1 - \frac{1}{K}\int \max_k g_k(X) dX \nonumber \\
&\leq& 1 - \frac{1}{K}\int g_{\hat{f}_n}(X) dX = R_n \nonumber
\end{eqnarray}
On the other hand,
\begin{eqnarray}
R_n - R^* &=& \frac{1}{K} \int (\max_k g_k(X) - g_{\hat{f}_n}(X)) dX \nonumber \\
&=&\frac{1}{K} \int (\max_k g_k(X) - \hat{g}_{\hat{f}_n,n}(X)) dX + \frac{1}{K}\int (\hat{g}_{\hat{f}_n,n}(X) - g_{\hat{f}_n,n}(X) ) dX \nonumber \\
&=& \frac{1}{K}\int (\max_k g_k(X) - \max_k \hat{g}_{k,n}(X)) dX + \frac{1}{K}\int (\hat{g}_{\hat{f}_n,n}(X) - g_{\hat{f}_n,n}(X) ) dX \nonumber \\
&\leq& \frac{1}{K}\sum_{k=1}^K \int |g_k(X) - \hat{g}_{k,n}(X)| dX. \nonumber
\end{eqnarray}

\end{proof}

A similar result of Lemma~\ref{lemma} can be found in the Theorem 1 in~\cite{Devroye1985}~(p.~254). Now we prove the main theorem of our paper.

\begin{proof}[Proof of Theorem~\ref{thm}]
Due to condition (2), we have
\bes
H^*_k = [X: g_k(X) = \max_{1\leq j \leq K} g_j(X)] \label{eq:H_star}.
\ees

On the other hand, based on the assumption (3), the density functions can be written as
\begin{align*}
&g_k(X) = C(d) \beta \exp{(-\alpha t )}, \\
&t = (X-\bu_k)^T(I-\bP_k)(X-\bu_k)
\end{align*}
for some $\alpha>0, \,\beta$ independent of $t$, constant $C(d)$ and $\bP_k=\bB_k\bB_k^T$ with $\bB_k$ being the orthonormal basis for $L_k$.

Then the classifier generated by the Algorithm~\ref{alg} can be written as:
\bes
\hat{f}_n(X) = \sum_{k=1}^K kI_{\hat{H}_{k,n}}
\label{eq:fn_hat}
\ees
with the following notation:
\begin{align*}
&\hat{\bP}_k = \hat{\bB}_k\hat{\bB}_k^T  \\
&\hat{g}_{k,n}(x) = C(d) \beta \exp{(-\alpha (X-\hat{\bu}_k)^T(I-\hat{\bP}_k)(X-\hat{\bu}_k))} \\
&\hat{H}_{k,n} = [X: \hat{g}_{k,n}(X) = \max_{1\leq j \leq K} \hat{g}_{j,n}(X)] \label{H_hat}
\end{align*}

Thus the NSS classifier can be considered as a plug-in version of the Bayes rule. By Lemma~\ref{lemma}, the difference $R_n-R^*$ can be bounded in the form
\bes
0 \leq R_n-R^* \leq \frac{1}{K} \sum_{k=1}^K \int_{\R^D} |g_k(X) - \hat{g}_{k,n}(X)| d X \label{eq:RnRstar}
\ees

For each fixed $1\leq k \leq K$, we have
\bes
0 \leq \int_{\R^D} |g_k(X) - \hat{g}_{k,n}(X)| d X \leq \int_{\R^D} g_k(X) + \hat{g}_{k,n}(X) d X < \infty
\ees

Therefore, it suffices to show that $\hat{g}_{k,n} \to \hat{g}_k$~{\it a.s} and due to the continuity of $g(\cdot)$, to show $\hat{\bu}_k \to \bu_k$ and $\hat{\bP}_k \to \bP_k$~{\it a.s}. The fact that $\hat{\bu}_k$ and $\hat{\bP}_k$ are the maximum-likelihood estimations (MLE) of $\bu_k$ and $\bP_k$ completes the proof.
\end{proof}

\section{Experiments \label{sec:experiments}\label{sec:experiments}}
In this section, we evaluate the performance of the NSS algorithm through various experiments and compare it with LDA, FDA and linear SVM. The purpose of demonstrating these results is two-fold. First, it is to show that, as a simple and basic method, the NSS algorithm can obtain very useful results and is comparable to its competitors. Second, it serves as a complementary perspective to the theoretical portion of the paper. The reason why we include LDA, FDA and linear SVM in the comparison is because they are similar to NSS. Note that the objective is not to prove NSS is state-of-the-art. On the other hand, the significance of studying this method has been fully discussed in Section~\ref{sec:discussions}.

\subsection{Data}
We test the classification methods on two simulated data sets and five real data sets. In the following, we will give a brief description for each of them and a summary of size, dimension and the number of classes can be found in Table~\ref{tab:data}.

\paragraph{Mixture Gaussian.} Data samples are generated from $K=3$ Gaussian distributions in $\R^3$ with means $\mu_1=(1,2,3)^T,\,\mu_2=(-1,-2,-3)^T,\,\mu_3=(-1,2,-3)^T$ and variances $ \Sigma_1=\begin{pmatrix}
3 & 0.2 & 0.1\\
0.2 & 2 & 0.2\\
0.1 & 0.2 & 2 \end{pmatrix},\,
\Sigma_2= \begin{pmatrix}
2 & 0 & 0\\
0 & 1 & 0\\
0 & 0 & 1 \end{pmatrix},\,
\Sigma_3= \begin{pmatrix}
2 & 0 & 0\\
0 & 2 & 0\\
0 & 0 & 3 \end{pmatrix}$.
The total number of samples is 1200; 400 in each class.

\paragraph{Multiple Subspaces.} For the multiple subspaces experiment, data are generated uniformly from 3 2-dimensional linear subspaces (bounded in a unit disk) in $\R^{50}$. The angles between the subspaces are at least $\frac{\pi}{8}$. Gaussian noise with $0$ mean and $0.05$ standard deviation is added. Again, 1200 samples are generated in total with 400 in each class.

\paragraph{Wine.} Wine recognition data are the results of a chemical analysis of wines grown in the same region in Italy but derived from three different cultivars. The goal is to determine the types of wines from the quantities of 13 constituents found in them. The data were first collected in~\cite{wine} and now can be found in the UCI machine learning repository.

\paragraph{DNA.} We use the Statlog version of the primate splice-junction DNA data set (found in~\cite{dna}). The problem is to recognize, given a sequence DNA, the boundaries between exons (the parts of the DNA sequence retained after splicing) and introns (the parts of the DNA sequence that are spliced out). The features are binary variables representing nucleotides in the DNA sequence. Three classes are ``intron to exon" boundary, ``exon to intron" boundary and neither. This data set has three subsets, training, evaluation and testing. All of them are used in our experiments.

\paragraph{USPS.} USPS~\cite{usps} is a database of scanning images of handwritten digits from US Postal Services envelopes. The goal is to recognize digits given their $16\times 16$ grayscale images. Both the training and testing sets are used in our experiments.

\paragraph{Vehicle.} This Vehicle data set~\cite{vehicle} collects signals obtained by both acoustic and seismic sensors and the goal is to classify vehicle types from the original data. It has two subsets, training and testing, and we use both of them in our experiments.

\paragraph{News20.} The 20 Newsgroups data set is a collection of approximately 20,000 newsgroup documents, partitioned (nearly) evenly across 20 different newsgroups. The problem imposed here is to recognize the newsgroups from texts. This data was originally collected in~\cite{news20}. Due to the very large scale of the data, we use only the testing set in our experiment for simplicity.

\setlength{\abovecaptionskip}{4pt}   
\setlength{\belowcaptionskip}{4pt}
\begin{table}[htbp!]
\centering
\caption{A summary of the data sets} \label{tab:data}
\begin{tabular}{|@{}c@{}|@{}c@{}|@{}c@{}|@{}c@{}|@{}c@{}|}
\hline
\multirow{2}{*}{data}& data size &\multirow{2}{*}{\# of classes} & ambient dimension&reduced\\
& (\# of samples) & &( \# of features) & dimension \\
\hline
Mixture Gaussian &  1200 & 3 & 3 & NA\\
Multiple Subspaces &  1200 & 3 & 50 & NA\\

\hline
Wine&178&3&13&NA \\
Vehicle&98,528&3&100&NA\\
DNA&3,186&3&180&NA\\
USPS&9,298&10&256&38\\
News20&3,993&20&62,060&1000\\
\hline
\end{tabular}
\end{table}

\subsection{Implementation Details}
Real data used in our experiments are originally from the UCI machine learning repository, Statlog and other collections. We download them from~\cite{dna}, where data samples have been scaled linearly to be within [0, 1] or [-1, 1]. For the data sets USPS and News20, the ambient dimension is reduced by Principal Component Analysis to be at most 1000 and such that 95\% variance is explained. The reduced dimension for the USPS and News20 data sets is shown in the last column of Table~\ref{tab:data}. For NSS,  the intrinsic dimension of the subspaces is determined by 10-fold cross validation.

The classification experiments are carried out in Matlab. We use the default function {\it classify} of the Statistics toolbox for LDA. For multiclass FDA and SVM (see~\cite{multiFDA} and~\cite{multiSVM_CS}), we use implementations from~\cite{multiFDA_matlab} and~\cite{MSVMpack}. The NSS can be simply realized and the version we use can be found from the author's homepage \url{http://www.math.duke.edu/~yiwang/}.

\subsection{Results}
For each data set, we randomly split it into two subsets, each with 80\% and 20\% of the data, and use the former as the training set and the latter as the testing set. All experiments are repeated 200 times for the simulated data sets and 10 times for the real data sets, including the random generation (for the simulated data sets) and the random splitting processes. The mean and standard deviation of the accuracy of all methods under investigation are reported in Table~\ref{tab:results}, while the running time for the training process is recorded in Table~\ref{tab:time}.

\setlength{\abovecaptionskip}{4pt}   
\setlength{\belowcaptionskip}{4pt}
\begin{table}[htbp!]
\centering
\caption{A summary of classification results: mean accuracy $\pm$ standard deviation (\%)} \label{tab:results}
\begin{tabular}{|c|c|c|c|c|}
\hline
\multirow{2}{*}{Data} & \multicolumn{4}{c|}{Methods} \\
\cline{2-5}
 & NSS & LDA & FDA & SVM \\
\hline
Gaussian &  88.11 $\pm$ 4.47 & {\bf 95.12} $\pm$ 1.58 & 81.43 $\pm$ 3.74 & 94.93 $\pm$ 1.58  \\
Subspace & {\bf 99.16} $\pm$ 0.51 & 34.97 $\pm$ 2.73 & 33.74 $\pm$ 2.26 & 46.57 $\pm$ 3.23  \\

\hline
Wine&94.29 $\pm$ 3.81 & {\bf 98.57} $\pm$ 2.02 & 92.57 $\pm$ 3.86 & 96.29 $\pm$ 1.93 \\
Vehicle&74.23 $\pm$ 0.34 & {\bf 80.15} $\pm$ 0.22 & 75.11 $\pm$ 0.33 & \text{NA} \\
DNA&90.28 $\pm$ 1.05 & {\bf 93.23} $\pm$ 1.02 & 78.59 $\pm$ 2.07 & 91.11 $\pm$ 0.94 \\
USPS&{\bf 96.54} $\pm$ 0.39 & 91.19 $\pm$ 0.58 & 48.58 $\pm$ 1.01 & 94.02 $\pm$ 0.48 \\
News20& 75.18 $\pm$ 1.60 & 35.55 $\pm$ 1.72 & 9.50 $\pm$ 1.39 & {\bf 75.54} $\pm$ 1.26 \\
\hline
\end{tabular}
\end{table}

\setlength{\abovecaptionskip}{4pt}   
\setlength{\belowcaptionskip}{4pt}
\begin{table}[htbp!]
\centering
\caption{A summary of running time on real data sets (seconds)} \label{tab:time}
\begin{tabular}{|c|c|c|c|c|}
\hline
\multirow{2}{*}{Data} & \multicolumn{4}{c|}{Methods} \\
\cline{2-5}
 & NSS & LDA & FDA & SVM \\
\hline
Wine& $8.175\times 10^{-4}$ & 0.012 & 0.019 & 0.654\\
Vehicle&1.064&1.206&3.449&\text{NA}\\
DNA&0.042 & 0.037&0.212&73.685 \\
USPS&0.035&0.052&0.148&935.296 \\
News20& 1.089&1.018&12.841&168.209 \\
\hline
\end{tabular}
\end{table}

From the above results, we know that the NSS algorithm can obtain results comparable to its better known competitors LDA and SVM, for a broad range of classification problems. Meanwhile, the computation is very fast, roughly the same order as FDA and LDA, but significantly faster than SVM, especially for large scale problems. Additionally, LDA requires that the covariance matrix is positive definite, which is not satisfied in some high dimensional data sets. This is another reason why we reduce the ambient dimension for the USPS and News20 datasets. However, NSS does not have this restriction.

\section{Conclusion}
In this paper, we reviewed a simple classification algorithm (NSS) based on the model of multiple subspaces. We proved its strong consistency under certain conditions, which means that under these conditions, the prediction error of NSS on average converges to that of the optimal classifier. Finally, we evaluated NSS on various data sets and compared it with its competitors. Results showed that NSS can obtain very useful results efficiently, especially for large scale data sets.

By studying the consistency property of NSS, we are inspired to further explore subspace-based classification methods along the following directions in the future. First, NSS finds a good estimation for the underlying subspace models by minimizing the sum of squares of fitting errors. However, for the purpose of classification, it is more helpful to obtain models which can ``separate" or ``discriminate" classes. Therefore, in order to improve the classification performance, some separation measure can be taken into account. In fact, an advanced supervised learning method based on multiple subspaces has been proposed~\cite{SS08}. It would be fruitful to analyze this method or other variants theoretically.

Moreover, a general way to find a good classifier is to minimize an empirical risk function, which is typically defined as $R_{\text{emp}}(f) = \sum_{i=1}^n \mathbbm{1}(f(\bx_i)\neq y_i)$. This idea can be combined with the multiple subspaces model. Similar approaches to that in~\cite{Vapnik1999} can be applied to analyze its consistency.

\section*{Acknowledgements}
We thank Dr.~Gilad Lerman for motivating the author to work on the problem and for valuable discussions and Dr.~Mauro Maggioni for inspiring comments and suggestions.

\bibliographystyle{plain}
\bibliography{hlm_classifier}
\end{document}